\DeclareMathOperator*{\argmax}{argmax}
\title{STUDY ON FEATURE SUBSPACE OF ARCHETYPAL EMOTIONS FOR SPEECH EMOTION RECOGNITION}
\name{Xi Ma$^{1,3}$, Zhiyong Wu$^{1,2,3}$, Jia Jia$^{1,3}$, Mingxing Xu$^{1,3}$, Helen Meng$^{3}$, Lianhong Cai$^{1,3}$}
\address{$^1$Tsinghua-CUHK Joint Research Center for Media Sciences, Technologies and Systems,\\
Graduate School at Shenzhen, Tsinghua University, Shenzhen 518055, China\\
  $^2$Department of Systems Engineering and Engineering Management,\\
  The Chinese University of Hong Kong, Shatin, N.T., Hong Kong SAR, China\\
  $^3$Tsinghua National Laboratory for Information Science and Technology (TNList),\\
  Department of Computer Science and Technology, Tsinghua University, Beijing 100084, China\\
  {\small \tt max15@mails.tsinghua.edu.cn}, {\small \tt \{zywu,hmmeng\}@se.cuhk.edu.hk}, {\small \tt \{jjia,xumx,clh-dcs\}@tsinghua.edu.cn}
}
\begin{document}
%
\maketitle
\begin{abstract}
Feature subspace selection is an important part in speech emotion recognition. Most of the studies are devoted to finding a feature subspace for representing all emotions. However, some studies have indicated that the features associated with different emotions are not exactly the same. Hence, traditional methods may fail to distinguish some of the emotions with just one global feature subspace. In this work, we propose a new divide and conquer idea to solve the problem. First, the feature subspaces are constructed for all the combinations of every two different emotions (emotion-pair). Bi-classifiers are then trained on these feature subspaces respectively. The final emotion recognition result is derived by the voting and competition method. Experimental results demonstrate that the proposed method can get better results than the traditional multi-classification method.
\end{abstract}
\begin{keywords}
speech emotion recognition, feature subspace, emotion pair
\end{keywords}
\section{Introduction}
\label{sec:intro}

Emotion recognition plays an important role in many applications, especially in human-computer interaction systems that are increasingly common today. As one of the main communication media between human beings, voice has received widespread attention from researchers~\cite{ayadi2011}. Speech contains a wealth of emotional information, how to extract such information from the original speech signal is of great importance for speech emotion recognition.

As an important part of speech emotion recognition, the selection of feature subspace has attracted lot of research interests. Existing researches on feature subspace selection can be divided into three categories, including the artificial selection of emotion related features, the automatic feature selection algorithms to select feature subset from a large set of numerous feature candidates, and the transformation method to map the original feature space to the new one in favor of emotion recognition. Most of these researches are devoted to finding a common and global feature subspace that can represent all kinds of emotions. However, studies have already indicated that the features associated with different emotions are not exactly the same. In other words, if we can divide the whole emotions space into several subspaces and find the features that are most distinguishable for each subspace separately, the emotion recognition performance on the whole space might be boosted. Motivated by this, we propose a divide and conquer idea for emotion recognition by leveraging feature subspaces. The feature subspaces are first constructed for every two different emotions (i.e. emotion-pair); bi-classifier are then used to distinguish the emotions for each emotion-pair from the feature subspace; and the final emotion recognition result is derived by voting and competition method.

The reset of the paper is organized as follows. Section~\ref{sec:related_work} summarizes previous related work on feature selection. Our proposed method is then detailed in Section~\ref{sec:method}. Experiments and results are presented in Section~\ref{sec:expeiment}. Section~\ref{sec:conclusion} concludes the paper.

\section{Related Work}
\label{sec:related_work}

As a common issue for many classification problems~\cite{dash1997}, feature selection aims to pick a subset of features that are most relevant to the target concept~\cite{guyon2003} or to reduce the dimension of features for reducing computational time as well as improving the performance~\cite{belhumeur1997}. There have been many studies on feature selection for speech emotion recognition. In~\cite{busso2009, cowie2001, vayrynen2013}, prosody-based acoustic features, including pitch-related, energy-related and timing features have been widely used for recognizing speech emotion. Spectral-based acoustic features also play an important role in emotion recognition, such as Linear Predictor Coefficients (LPC)~\cite{rabiner1978}, Linear Predictor Cepstral Coefficients (LPCC)~\cite{atal1974} and Mel-frequency Cepstral Coefficients (MFCC)~\cite{davis1980}. In~\cite{gobl2003}, voice quality features have also been shown to be related to emotions.

Besides manual selection, there have also many automatic feature selection algorithms been proposed. For example, Sequential Floating Forward Selection (SFFS)~\cite{ververidis2008} is an iterative method that can find a subset of features near to the optimal one. Some evolutionary algorithms such as Genetic Algorithm (GA)~\cite{ferri1993} are often used in feature selection. Feature space transformation is another type of method, including Principal Component Analysis (PCA)~\cite{belhumeur1997} and Neural Network (NN)~\cite{yu2013}.

To describe emotions, some studies have used the psychological dimensional space such as the 2-dimensional arousal-valence model and the 3-dimensional valence-activation-dominance model~\cite{schlosberg1954}. Besides, discrete emotional labels, the so-called archetypal emotions~\cite{ortony1990}, are common used in speech emotion recognition. Different archetypal emotions are located at different locations in the dimensional space. \cite{lugger2008} has proposed a hierarchical approach to classify the speech emotions with the dimensional model. However, the selection of emotions at different stages is too subjective, and the used feature sets may not have a good match to the psychological emotional model.

\section{Method}
\label{sec:method}

Our study is based on archetypal emotions. The emotion-pair is composed of two different kinds of archetypal emotion, like Anger and Happiness. For all possible combinations of archetypal emotion-pairs, the bi-classification and voting method is used to distinguish every emotion-pairs and to derive the final emotion recognition result, As shown in Figure~\ref{fig:flow}, the whole method involves four steps: feature extraction, feature subspace selection, emotion classification and voting decision.

\begin{figure}[htb]
  \centering
  \centerline{\includegraphics[width=9cm]{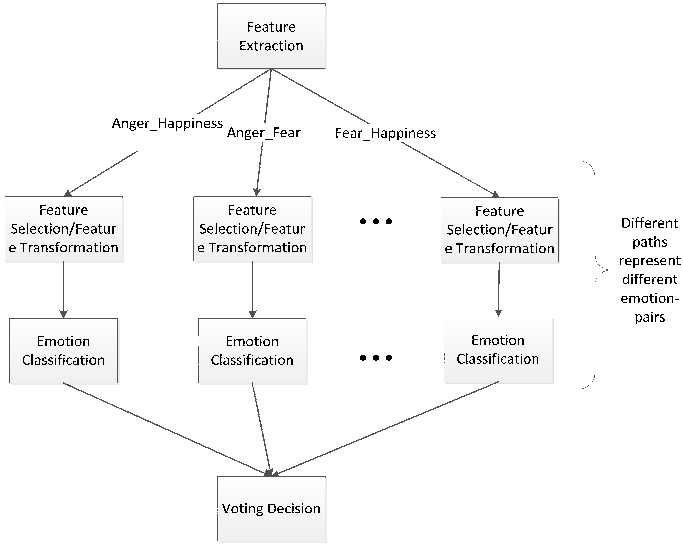}}
\caption{Flow chart of bi-classification and voting.}
\label{fig:flow}
\end{figure}

\subsection{Feature Extraction}
\label{ssec:feature_extraction}

The used acoustic features include the following low-level descriptors (LLDs): Intensity, Loudness, 12 MFCC, Pitch (F0), Probability of voicing, F0 envelope, 8 LSF (Line Spectral Frequencies), Zero-Crossing Rate. Delta regression coefficients are computed from these LLDs, and the following statistical functionals are applied to the LLDs and delta coefficients: Max./Min. value and respective relative position within input, range, arithmetic mean, 2 linear regression coefficients and linear and quadratic error, standard deviation, skewness, kurtosis, quartile 1-3, and 3 inter-quartile ranges. All the features are utterance-level features. In feature selection stage, the relevant feature subset or relevant feature space will be derived from the above large feature set.

\subsection{Feature Subspace Selection}
\label{ssec:feature_subspace_selection}

Different from traditional methods that distinguish all emotions with just one global feature subspace, this work selects different feature subspaces for different combination of emotion-pairs. For a specific emotion-pair, its corresponding feature subspace should be of the best power in distinguishing the two emotions of the pair. In order to verify the generality of our idea, the methods of feature subset selection and feature space transformation has been considered. Genetic algorithm (GA) is used for feature subset selection, while neural network (NN) is used for feature space transformation.

GA is a kind of stochastic searching and optimizing algorithm, that simulates the natural evolution process. We use a fixed number of features to form a vector (so called individual), and a fixed number of individuals to form the first population. Crossover and mutation operation is then used to generate a new individual. New population will be selected by comparing fitness. The ``Wrapper'' method is used to calculate the fitness of individuals, i.e. the accuracy of the classifier is used as the fitness. The above procedures are repeated until the average fitness of population reaches the threshold or the evolutionary generation reaches the threshold. Compared to some other heuristic searching algorithm, such as Sequential Floating Forward Selection (SFFS), it is more flexible to control the computing time for GA, especially when the feature set is relatively large.

\subsection{Emotion Classification}
\label{ssec:emotion_classification}

By using the feature subspace obtained in the previous step, a particular classifier  can be trained for a specific emotion-pair and be designated to distinguish the emotions in that emotion-pair. As each classifier is only related to a specific emotion-pair, we call it bi-classifier. For feature subset selection, two basic classification algorithms are used, including Logistic Regression (LR) and Support Vector Machine (SVM). For feature space transformation, neural network (NN) is used as the classifier.

\subsection{Voting Decision}
\label{ssec:voting_decision}

After getting the emotion distinguishing result for each emotion-pair in the previous emotion classification step, a voting and competition method is finally used to integrate the emotion classification results for all emotion-pairs to derive the final emotion recognition result. The voting decision process is summarized in Algorithm~\ref{alg:voting}.

\begin{algorithm}[!htb]
    \caption{Voting Decision Algorithm}
    \begin{algorithmic}[1]
        \REQUIRE ~~
            \\
            Input:\\
            $M$: the number of emotions\\
            $E=\{e_i|i=1,2,...,M\}$: emotion set\\
            $R=\{r_{e_ie_j}|e_i \neq e_j; r_{e_ie_j}, e_i, e_j \in E\}$: classification result of bi-classifier\\

        \ENSURE ~~
        \STATE Compute the number of different emotions in $R$: $N_e=\{n_{e_i}|e_i \in R\}$\\
        \STATE Create an emotion set with the maximum number in $N_e$: $E_{max}=\{m_k|m_k \in \mathop{\argmax}_{n_{e_k}}{E}; k=1,2,...,K\}$\\
        \STATE $e_m:=m_1$
        \IF {$K=1$}
            \RETURN $e_m$
        \ELSE
            \FOR{$k=2$ to K}
                \STATE $e_m:=r_{e_mm_k}$
            \ENDFOR
            \RETURN $e_m$
        \ENDIF
    \end{algorithmic}
\label{alg:voting}
\end{algorithm}

It can be proved that the final emotion recognition result can be correctly derived by the above voting decision algorithm given that all the bi-classifiers give the correct distinguishing result for each emotion-pair. The theorem and proof procedure is described in Theorem~\ref{thm:voting_proof}.

\newtheorem{theorem}{Theorem}
\begin{theorem}
Voting decision will be able to derive the correct result given all bi-classifiers are in correct situation.
\label{thm:voting_proof}
\end{theorem}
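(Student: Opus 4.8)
The plan is to fix the meaning of the hypothesis and then reduce the whole claim to a one-line counting argument on the win counts $n_{e_i}$. Write $e_t$ for the genuine emotion of the utterance being classified. By a bi-classifier being ``in the correct situation'' I read the only meaningful statement one can impose: whenever $e_t$ is one of the two emotions of a pair, the bi-classifier for that pair returns $e_t$. For a pair that does not contain $e_t$ there is no ground truth at all, so the hypothesis says nothing about those outputs (they may be either of the two wrong emotions), and I would point out up front that the argument never needs them. The immediate consequence to record is that $r_{e_t e_j} = e_t$ for each of the $M-1$ pairs $\{e_t,e_j\}$ containing $e_t$.

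Next I would count wins. Since $e_t$ is declared the winner in all $M-1$ pairs that involve it, its tally is $n_{e_t}=M-1$, which is the largest value any emotion can attain in a round-robin over $M$ emotions (each emotion is compared against exactly the other $M-1$). For any other emotion $e'\neq e_t$, the comparison $\{e',e_t\}$ is resolved in favour of $e_t$, so $e'$ loses at least that one match and can be the winner in at most the remaining $M-2$ pairs. Hence $n_{e'}\le M-2 < M-1 = n_{e_t}$.

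From this strict inequality the conclusion is essentially automatic. The maximum of $N_e$ is attained uniquely at $e_t$, so the set $E_{max}$ built in step~2 is the singleton $\{e_t\}$ and $K=1$. The algorithm then enters the $K=1$ branch in step~4 and returns $e_m = m_1 = e_t$, exactly the genuine emotion. In particular the tie-breaking loop over $k=2,\dots,K$ is never executed under the hypothesis, so correctness of the procedure does not depend on that competition step at all.

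There is no deep obstacle here; the single point requiring care is pinning down ``correct situation,'' after which the whole statement is just the observation that a Condorcet-style winner that beats every opponent is automatically the strict plurality winner of a round-robin. I would therefore devote the bulk of the write-up to making the bound $n_{e'}\le M-2$ airtight (each non-genuine emotion provably loses its match against $e_t$), since that one inequality is what collapses $E_{max}$ to a singleton and forces the algorithm down the correct branch.
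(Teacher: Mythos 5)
Your proposal is correct and follows essentially the same route as the paper's own proof: the target emotion wins all $M-1$ of its pairwise matches, every other emotion loses at least its match against the target and so has count at most $M-2$, hence the maximum is attained uniquely and the algorithm returns the target in the $K=1$ branch. Your write-up merely makes explicit two points the paper leaves implicit (what ``correct'' means for pairs not containing the true emotion, and that the competition loop is never entered), but the underlying counting argument is identical.
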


\begin{proof}
    Given the symbol definitions in Algorithm~\ref{alg:voting}, let $e_i$ be the target emotion and $e_i \in E$. So,
    \begin{align*}
        R{\quad}is{\quad}correct&\Rightarrow n_{e_i}=M-1\\
        &\Rightarrow n_{e_j}<M-1, e_j \neq e_i\\
        &\Rightarrow e_m=e_i
    \end{align*}
\end{proof}

\section{Experiment}
\label{sec:expeiment}
\subsection{Experimental Setting}
\label{ssec:experimental setting}

In this study, we used the well known Berlin emotional database (EmoDB)~\cite{burkhardt2005}. Ten actors (5 male and 5 female) simulated the emotions, producing 10 German sentences (5 short and 5 longer). EmoDB comprises 535 utterances that cover 6 archetypal emotions and
1 neutral emotion from everyday communication, namely, Anger, Fear, Happiness, Sadness, Disgust, Boredom and Neutral. Our work focuses on speaker independent emotion recognition, hence the samples of 8 actors (4 males and 4 females) are used as the training set, and the samples from the other 2 actors (1 male and 1 female) are used as the test set. The 5-fold-cross-validation method is used to conduct the experiments. OpenSmile toolkit~\cite{eyben2013} is used to extract acoustic features, and a total of 988 features are obtained.

Two experiment are conducted. In the first one, GA is used to select feature subset for each emotion-pair. As for emotion classification, the same emotion classifier is used for all emotion-pairs, but trained with different features subsets associated with different emotion-pairs. Furthermore, the same classifier is also used to recognize the emotions from the feature subsets associated with all emotions. This experiment is to verify that selecting the feature subset associated with emotion-pairs is better than the feature subset associated with all emotions using the same classifier. The details of parameter setting for GA are as follows: individual size 50, population size 100, two-point crossover with crossover probability 0.8, substitution mutation with mutation probability 0.1. If the generation number reaches 300 or the fitness value does not improve for the last 100 generation, the GA algorithm stops. In this experiment, 50 most representative features are selected by the GA algorithm to form the feature subset for not only every emotion-pair but also all emotions. Furthermore, two different classifiers (LR and SVM) are tested.

In the second experiment, neural network (NN) is used not only for feature space transformation but also as the classifier. This experiment is to verify that feature space transformation to the feature space associated with emotion-pairs is better than the feature space associated with all emotions using the same feature space transformation method. For experimental settings, the neural network has a 988-unit input layer corresponding to the dimensionality of original feature vector, and one 50-unit hidden layer corresponding to the dimensionality of feature subset in the feature selection method. Batch gradient descend method is used to learn the weights and the activation function is the sigmoid function. The learning rate is set to 0.1.

\subsection{Experimental Result}
\label{ssec:experimental result}
\subsubsection{Feature Selection Method}
\label{sssec:feature_selection_method}

We first conduct the feature selection experiment by comparing the similarity degree of the feature subspace for each emotion-pair and the global feature subspace for all emotions. Figure~\ref{fig:fs_compare} depicts the number of the common features (vertical axis) that are shared between the feature subspace for a specific emotion-pair and the global feature subspace for all emotions, where the horizontal axis represents different emotion-pairs (e.g. N-A is the Neutral-Anger pair). It should be noted that all feature subspaces (including emotion-pair specific feature subspace, and the global feature subspace for all emotions) contain 50 selected features, as described in Section~\ref{ssec:experimental setting}. From the figure, it can be seen that the number of the common features between each emotion-pair and all emotions is no more than 5 (5 out of 50). This indicates that the feature subspace of each emotion-pair is quite different from the global feature space of all emotions. This further confirms the necessity to perform pair-wised emotion classification with feature subspace related to emotion-pairs.

\begin{figure}[!htb]
  \centering
  \centerline{\includegraphics[width=9cm]{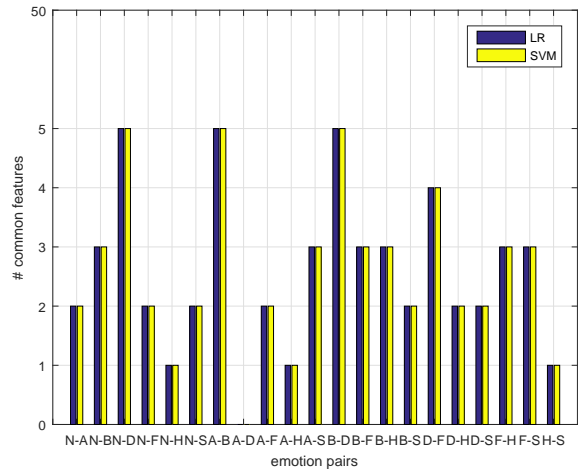}}
\caption{The number of the common features (vertical axis) shared between the feature subspace for each emotion-pair and the global feature subspace for all emotions. (N:Neutral, A:Anger, B:Boredom, H:Happiness, S:Sadness, D:Disgust, F:Fear)}
\label{fig:fs_compare}
\end{figure}

We further conducted emotion recognition experiment to compare the emotion recognition accuracies of different feature selection criterions between the proposed method and the traditional method. Experimental results are shown in Table~\ref{tab:accuracy_fs}, where ``Bi-classification and voting'' is the proposed method, while ``Multi-classification'' is the traditional method using the global feature subspace for all emotions. As can be seen, the recognition accuracy obtained by ``Bi-classification and voting'' is significantly higher than that using the ``Multi-classification'' method ($P<0.05$ by T-test).

\begin{table}[!htb]
\caption{Comparison of emotion recognition accuracy by using different feature selection criterions. ($P<0.05$)}
\begin{tabular}{c|c|c}
    \hline
    & Logistic Regression & SVM\\
    \hline
    Bi-classification and voting & 0.735 & 0.625 \\
    \hline
    Multi-classification & 0.653 & 0.513 \\
    \hline
\end{tabular}
\footnotesize
\label{tab:accuracy_fs}
\end{table}

The emotion recognition accuracy (recognition rate) of different emotions are further computed and shown in Figure~\ref{fig:fs_rate},  where \textbf{Bi-clf and voting} represents ``Bi-classification and voting'', \textbf{Multi-clf} represents ``Multi-classification''. It should be noted that the result about Disgust is not depicted because there are only quite few utterances with Disgust emotion. The experimental results indicate that the ``Bi-classification and voting'' method achieve better performance than the ``Multi-classification'' method for all emotions using both classifiers (LR and SVM). This result further proves that the priori information of the emotion-pair is helpful to feature selection and can bring further performance improvements for emotion recognition.

\begin{figure}[!htb]
  \centering
  \centerline{\includegraphics[width=9cm]{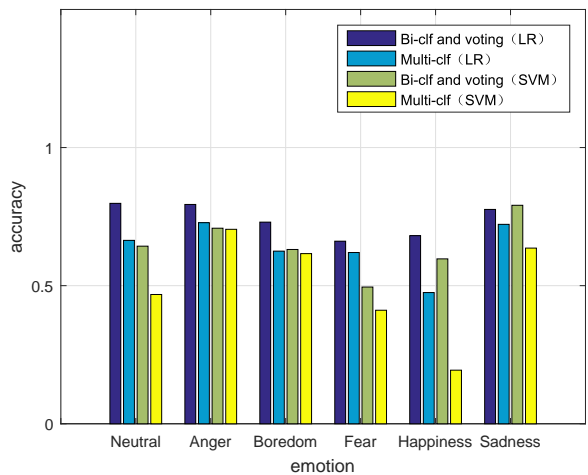}}
\caption{Emotion recognition accuracy of different emotions by using different feature selection criterions.}
\label{fig:fs_rate}
\end{figure}

\subsubsection{Feature Space Transformation}
\label{sssec:feature_space_transformation}

Similarly, we also conduct experiments in the feature space transformation scenario to validate the efficiency of our proposed divide and conquer idea for emotion recognition. The emotion recognition accuracy (recognition rate) by using different classification criterions withe feature space transformation are shown in Table~\ref{tab:accuracy_ft}, and the recognition of different emotions are showed in Figure~\ref{fig:ft_rate}.

\begin{table}[!htb]
\caption{Comparison of emotion recognition accuracy by using feature space transformation.($P<0.05$)}
\begin{tabular}{c|c}
    \hline
    & Neural Network\\
    \hline
    Bi-classification and voting & 0.652 \\
    \hline
    Multi-classification & 0.552 \\
    \hline
\end{tabular}
\footnotesize
\label{tab:accuracy_ft}
\end{table}

\begin{figure}[!htb]
  \centering
  \centerline{\includegraphics[width=9cm]{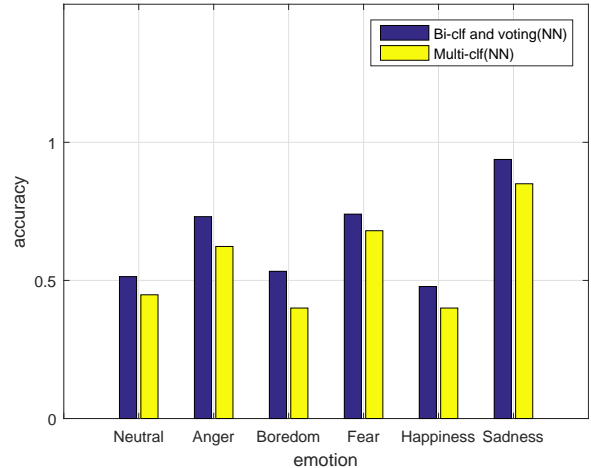}}
\caption{Emotion recognition accuracy of different emotions for different classification methods by using feature space transformation.}
\label{fig:ft_rate}
\end{figure}

From the experimental results, we can get the same conclusion as the feature selection method. It is indicated that the method of ``Bi-classification and voting'' is also effective in feature space transformation. This confirms the generality of our proposed method. For details of the experiment, please refer to our code and document on GitHub\footnote{$git@github.com:mxmaxi007/Emotion\_Recognition.git$}.

\section{Conclusion}
\label{sec:conclusion}

In this paper we present a ``Bi-classification and voting'' method by distinguishing different emotion-pairs in different feature space. The experimental results have proved that this method can get better result compared to the traditional multi-classification method. In addition, our method is a kind of divide and conquer algorithm which converts a complex multi-classification problem into many simple bi-classification problems. This idea makes it possible to boost the multi-class emotion recognition performance by optimizing the emotion classification performance for each emotion-pair. Hence, our future work will be devoted to the classifier optimization of different emotion-pairs.

\section{Acknowledgement}
\label{sec:acknowledgement}

This work is supported by National High Technology Research and Development Program of China (2015AA016305), National Natural Science Foundation of China (NSFC) (61375027, 61433018 and 61370023), joint fund of NSFC-RGC (Research Grant Council of Hong Kong) (61531166002, N\_CUHK404/15) and Major Program for National Social Science Foundation of China (13\&ZD189).


\bibliographystyle{IEEEbib}
\bibliography{refs}

\end{document}